\documentclass[mathpazo]{cicp}
\usepackage{bm}
\newtheorem{assumption}{Assumption}

\usepackage{graphicx}
\usepackage{tabularx}
\usepackage{mathrsfs}%
\usepackage{enumitem}
\usepackage{cite}
\usepackage{algorithm}%
\usepackage{algorithmicx}%
\usepackage{algpseudocode}
\usepackage[colorlinks=true,linkcolor=blue]{hyperref}
\usepackage{float}
\usepackage{makecell}
\usepackage{amsmath,amsthm, amssymb, graphicx, caption, subcaption}  
\usepackage{amsfonts}
\usepackage{booktabs}  
\usepackage{geometry}
\usepackage{hyperref}
\usepackage{caption}
\newtheorem{condition}[proposition]{Condition}
\newcommand{\norm}[1]{\left\|#1\right\|}
\newcommand{\K}{\mathcal{K}}
\newcommand{\R}{\mathcal{R}}

\begin{document}
	\title{Interpretable Operator Learning for Inverse Problems via Adaptive Spectral Filtering: Convergence and Discretization Invariance}
	

	
\author[Cheng P et.~al.]
{
Hang-Cheng Dong\affil{1},
Pengcheng Cheng\affil{2}\comma\corrauth,
Shuhuan Li\affil{3}
}
\address{\affilnum{1}\ School of Instrumentation Science and Engineering, Harbin Institute of Technology, Harbin, 150001, China.\\
\affilnum{2}\ School of Mathematics, Jilin University, Changchun, 130012, China.\\
\affilnum{3}\ School of Mathematics, Inner Mongolia University, Hohhot, 010021, China.
}
 \emails{{\tt hunsen\_d@hit.edu.cn} (H.~Dong),{\tt chengpc1022@mails.jlu.edu.cn} (P.~Cheng), {\tt 2219380593@qq.com} (S.~Li)}

	
\begin{abstract}
Solving ill-posed inverse problems necessitates effective regularization strategies to stabilize the inversion process against measurement noise. While classical methods like Tikhonov regularization require heuristic parameter tuning, and standard deep learning approaches often lack interpretability and generalization across resolutions, we propose SC-Net (Spectral Correction Network), a novel operator learning framework. SC-Net operates in the spectral domain of the forward operator, learning a pointwise adaptive filter function that reweights spectral coefficients based on the signal-to-noise ratio. We provide a theoretical analysis showing that SC-Net approximates the continuous inverse operator, guaranteeing discretization invariance. Numerical experiments on 1D integral equations demonstrate that SC-Net: (1) achieves the theoretical minimax optimal convergence rate ($O(\delta^{0.5})$ for $s=p=1.5$), matching theoretical lower bounds; (2) learns interpretable sharp-cutoff filters that outperform Oracle Tikhonov regularization; and (3) exhibits zero-shot super-resolution, maintaining stable reconstruction errors ($\approx 0.23$) when trained on coarse grids ($N=256$) and tested on significantly finer grids (up to $N=2048$). The proposed method bridges the gap between rigorous regularization theory and data-driven operator learning.
\end{abstract}
		\ams{65J20,65J22}
	\keywords{Inverse Problems, Operator Learning, Spectral Regularization, Deep Neural Networks, Discretization Invariance, Convergence Rates.}
	
\maketitle
\section{Introduction}
\label{sec:intro}

Inverse problems constitute the mathematical backbone of numerous scientific and engineering disciplines, aiming to recover unknown physical quantities from indirect and noisy observations. Such problems are ubiquitous, ranging from medical imaging modalities like Computed Tomography (CT) and Magnetic Resonance Imaging (MRI)~\cite{kak2001principles,lustig2007sparse} to geophysical exploration~\cite{tarantola2005inverse}, electromagnetic scattering~\cite{colton2013inverse}, and optical microscopy~\cite{mertz2019introduction}. Mathematically, these problems are typically modeled as Fredholm integral equations of the first kind,
\begin{equation}
    y = \mathcal{K}f + \epsilon,
\end{equation}
where $\mathcal{K}$ is a compact linear operator, $f$ is the signal of interest, and $\epsilon$ represents inevitable measurement noise. Following Hadamard’s postulates~\cite{hadamard1902problemes}, such problems are notoriously ill-posed: the compactness of $\mathcal{K}$ implies that its singular values decay to zero, causing the inverse mapping to be unbounded. Consequently, naive inversion attempts amplify high-frequency noise, rendering the solution physically meaningless~\cite{engl1996regularization}.

To mitigate numerical instability, regularization theory replaces the ill-posed problem with a well-posed approximation. Classical deterministic methods, such as Tikhonov regularization~\cite{tikhonov1963solution} and Total Variation (TV) minimization~\cite{rudin1992nonlinear}, stabilize the inversion by imposing prior constraints—smoothness or sparsity—on the solution. Spectral methods, including Truncated Singular Value Decomposition (TSVD)~\cite{hansen2010discrete}, explicitly filter out high-frequency components associated with small singular values. While these approaches are grounded in rigorous convergence theory~\cite{natterer1986mathematics}, their practical efficacy is often hampered by the difficulty of selecting optimal regularization parameters. Heuristic techniques like the Discrepancy Principle~\cite{morozov1966solution} or the L-curve method~\cite{hansen1992analysis} are computationally expensive or sensitive to noise estimation errors. Furthermore, the hand-crafted priors employed by classical methods (e.g., $L^2$ or $L^1$ norms) are often too simplistic to capture the complex, non-linear manifold structures of real-world high-dimensional signals~\cite{arridge2019solving}.

In the past decade, Deep Learning (DL) has revolutionized the landscape of computational imaging. Early data-driven approaches utilized Convolutional Neural Networks (CNNs) primarily for post-processing, mapping crude reconstructions to high-quality images~\cite{jin2017deep,kang2017wavelet}. Subsequent innovations, such as unrolled optimization networks (e.g., LISTA~\cite{gregor2010learning}, ADMM-Net~\cite{yang2016admm}, and ISTA-Net~\cite{zhang2018ista}), integrated the physics of the forward operator into the network architecture, achieving state-of-the-art reconstruction performance~\cite{adler2017solving}. Despite these successes, a fundamental limitation of standard CNN-based architectures, including the widely used U-Net~\cite{ronneberger2015u}, is their dependence on a fixed discretization. These models learn mappings between finite-dimensional Euclidean spaces ($\mathbb{R}^{N} \to \mathbb{R}^{N}$), which leads to a lack of physical consistency: a model trained on a coarse grid typically fails or degrades significantly when applied to a finer mesh~\cite{ongie2020deep}. This behavior contradicts the underlying physics, where the fields are continuous variables independent of the measurement grid.

To address the resolution-dependence issue, the paradigm of Operator Learning has recently emerged, aiming to learn mappings between infinite-dimensional function spaces. Seminal works such as the Deep Operator Network (DeepONet)~\cite{lu2021learning} and the Fourier Neural Operator (FNO)~\cite{li2021fourier} have demonstrated remarkable success in solving parametric Partial Differential Equations (PDEs) in a resolution-invariant manner~\cite{kovachki2023neural}. However, applying operator learning frameworks to inverse problems presents unique challenges compared to forward modeling. The inverse map is not continuous, and learning it requires explicitly balancing the bias-variance trade-off. Generic operator learners often function as black boxes, making it difficult to analyze their regularization properties or guarantee convergence as the noise level vanishes~\cite{de2023convergence,lanthaler2022error}. While recent works have explored learning regularization functionals or Bayesian surrogates~\cite{zhu2018bayesian}, there remains a gap in developing interpretable operator learning frameworks that possess both the expressivity of deep neural networks and the theoretical guarantees of spectral regularization.

In this work, we bridge this gap by proposing the Spectral Correction Network (SC-Net), a novel operator learning framework designed specifically for ill-posed inverse problems. Unlike standard image-to-image translation models, SC-Net operates in the spectral domain of the forward operator. It leverages a lightweight neural network to learn a pointwise adaptive filter function, which reweights spectral coefficients based on the signal-to-noise ratio of the input data. This design allows SC-Net to approximate the continuous regularized inverse operator directly. We provide a comprehensive theoretical analysis proving that SC-Net achieves the minimax optimal convergence rate $O(\delta^{\frac{s}{s+p}})$ for Sobolev signals, matching theoretical lower bounds~\cite{natterer1986mathematics}. Empirically, we demonstrate that SC-Net not only outperforms Oracle Tikhonov regularization by learning a sharper spectral cutoff but also exhibits zero-shot super-resolution capabilities: a model trained on a coarse grid ($N=256$) yields stable reconstruction errors on significantly finer grids (up to $N=2048$) without retraining. This work thus offers a theoretically sound and practically robust solution for resolution-invariant inverse problem solving.

The remainder of this paper is organized as follows. Section \ref{2} provides the necessary mathematical background on the spectral theory of compact operators and reviews classical regularization strategies for ill-posed inverse problems. Section \ref{3} details the architecture of our SC-Net framework, describing how the learnable spectral filters are parameterized and optimized to approximate the continuous inverse operator. Section \ref{4} establishes the theoretical foundations of our method, providing rigorous proofs for the minimax optimal convergence rates and discretization invariance. Section \ref{sec:numerical_experiments} presents a comprehensive set of numerical experiments, including empirical verification of convergence orders, visualization of the learned interpretable filters, and zero-shot generalization tests across varying grid resolutions. Finally, Section \ref{sec:conclusion} concludes the paper with a summary of our contributions and a discussion of future research directions.

\section{Problem Formulation and Operator Framework}
\label{2}

In this section, we rigorously formulate the inverse source problem within a functional analysis framework. We introduce the forward operator, analyze its spectral properties through the Singular Value Decomposition (SVD), and delineate the mathematical nature of its ill-posedness using the Picard criterion. Finally, we characterize classical regularization methods as spectral filtering, which motivates our proposed learning-based framework.

\subsection{The Forward Problem}

Let $\Omega \subset \mathbb{R}^d$ ($d=2,3$) be a bounded domain with a Lipschitz continuous boundary $\partial \Omega$. We consider the inverse source problem governing a linear elliptic partial differential equation (PDE). Without loss of generality, we focus on the stationary diffusion-reaction equation (or Helmholtz equation with imaginary wavenumber) with homogeneous Dirichlet boundary conditions:
\begin{equation}
\begin{cases}
-\Delta u(\mathbf{x}) + \kappa u(\mathbf{x}) = f(\mathbf{x}), & \mathbf{x} \in \Omega, \\
u(\mathbf{x}) = 0, & \mathbf{x} \in \partial \Omega,
\end{cases}
\label{eq:pde}
\end{equation}
where $\kappa \ge 0$ is a constant absorption coefficient, $f \in L^2(\Omega)$ denotes the unknown source term, and $u \in H^2(\Omega) \cap H_0^1(\Omega)$ is the state variable.

Let $X = L^2(\Omega)$ and $Y = L^2(\partial \Omega)$ denote the Hilbert spaces for the source and the measurement, respectively, equipped with standard inner products $\langle \cdot, \cdot \rangle_X$ and $\langle \cdot, \cdot \rangle_Y$.

We assume the measurement data consists of the normal derivative of the field on the boundary (Neumann data). The forward operator $\mathcal{K}: X \to Y$ is defined as the composition of the solution operator $\mathcal{S}$ and the trace operator $\mathcal{T}$:
\[
\mathcal{K}f := \mathcal{T}(\mathcal{S}(f)) = \frac{\partial u}{\partial \nu}\bigg|_{\partial \Omega},
\]
where $\nu$ denotes the outward unit normal vector.

\begin{proposition}[Compactness of $\mathcal{K}$]
The operator $\mathcal{K}: L^2(\Omega) \to L^2(\partial \Omega)$ is a linear compact operator.
\end{proposition}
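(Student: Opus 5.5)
The plan is to factor $\mathcal{K}$ as a bounded map into a space of higher regularity, followed by a compact embedding. I will take the solution operator $\mathcal{S}: L^2(\Omega)\to H^2(\Omega)\cap H^1_0(\Omega)$ as bounded, and then show that the Neumann trace $u\mapsto \partial_\nu u|_{\partial\Omega}$ maps $H^2(\Omega)$ boundedly into $H^{1/2}(\partial\Omega)$. The final ingredient is that $H^{1/2}(\partial\Omega)\hookrightarrow L^2(\partial\Omega)$ is compact. Since the composition of bounded operators with a compact one is compact, $\mathcal{K}=\iota\circ\mathcal{T}\circ\mathcal{S}$ will then be compact. Linearity is immediate, because both $\mathcal{S}$ and $\mathcal{T}$ are linear.

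\textbf{Step 1 (well-posedness and $H^1$ bound).} Because $\kappa\ge0$, the bilinear form $a(u,v)=\int_\Omega \nabla u\cdot\nabla v+\kappa uv$ is coercive on $H^1_0(\Omega)$ by the Poincar\'e inequality. Lax--Milgram then gives a unique weak solution with $\|u\|_{H^1}\le C\|f\|_{L^2}$.

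\textbf{Step 2 ($H^2$ regularity).} I would invoke elliptic regularity in the form $\|u\|_{H^2(\Omega)}\le C\|f\|_{L^2(\Omega)}$. This is the step I expect to be the main obstacle. For a merely Lipschitz domain, $H^2$ regularity can fail; at reentrant corners one only gets $H^{3/2}$. The paper, however, already asserts $u\in H^2(\Omega)\cap H^1_0(\Omega)$, so I would make this explicit. The cleanest route is to require $\partial\Omega\in C^{1,1}$ or $\Omega$ convex, and then cite the classical results (Grisvard for convex domains, Gilbarg--Trudinger for $C^{1,1}$ domains).

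If one insists on a general Lipschitz boundary, I would try a fallback. For $f\in L^2$, Jerison--Kenig-type results give $u\in H^{3/2}(\Omega)$ with $\Delta u\in L^2$. One then defines $\partial_\nu u$ in $L^2(\partial\Omega)$ via the Rellich identity / nontangential maximal function estimates, and obtains compactness from the $H^{3/2+\varepsilon}$-type gain or from a density-plus-uniform-bound argument. This variant is considerably more delicate, so I would state the smoothness assumption instead.

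\textbf{Step 3 (trace and compact embedding).} Under the smoothness assumption, $\nabla u\in H^1(\Omega)^d$, and the trace theorem gives $\nabla u|_{\partial\Omega}\in H^{1/2}(\partial\Omega)^d$. With $\nu$ Lipschitz (e.g.\ $C^{0,1}$ normal for a $C^{1,1}$ boundary), multiplication by $\nu$ preserves $H^{1/2}(\partial\Omega)$. Hence $\|\partial_\nu u\|_{H^{1/2}(\partial\Omega)}\le C\|u\|_{H^2(\Omega)}$.

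Finally, $\partial\Omega$ is a compact $(d-1)$-dimensional manifold, so Rellich--Kondrachov gives compactness of $H^{1/2}(\partial\Omega)\hookrightarrow L^2(\partial\Omega)$. Composing the three maps, any bounded sequence $(f_n)\subset L^2(\Omega)$ is sent to a sequence bounded in $H^{1/2}(\partial\Omega)$, which therefore has an $L^2(\partial\Omega)$-convergent subsequence. This proves that $\mathcal{K}$ is compact.
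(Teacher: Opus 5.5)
Your proposal follows the paper's proof step for step (the bound $\|u\|_{H^2(\Omega)}\le C\|f\|_{L^2(\Omega)}$, the Neumann trace $H^2(\Omega)\to H^{1/2}(\partial\Omega)$, and the compact embedding $H^{1/2}(\partial\Omega)\hookrightarrow L^2(\partial\Omega)$), and your remark that the $H^2$ estimate needs more than a Lipschitz boundary is correct---the paper relies on that stronger hypothesis without stating it. One small adjustment: your argument for $\partial_\nu u\in H^{1/2}(\partial\Omega)$ assumes a Lipschitz normal, which covers only the $C^{1,1}$ option; for the convex option, note that the trace map $H^1(\Omega)\to L^2(\partial\Omega)$ is itself compact on Lipschitz domains, apply it to $\nabla u\in H^1(\Omega)^d$, and then multiply by $\nu\in L^\infty(\partial\Omega)$.
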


\begin{proof}
From standard elliptic regularity theory, for any $f \in L^2(\Omega)$, the unique weak solution $u$ satisfies $\|u\|_{H^2(\Omega)} \le C \|f\|_{L^2(\Omega)}$. The trace operator $\mathcal{T}_\nu: u \mapsto \frac{\partial u}{\partial \nu}$ is bounded from $H^2(\Omega)$ to $H^{1/2}(\partial \Omega)$. Since the embedding $H^{1/2}(\partial \Omega) \hookrightarrow L^2(\partial \Omega)$ is compact by the Rellich--Kondrachov theorem, the composition $\mathcal{K}$ is a compact operator. 
\end{proof}

\subsection{Spectral Decomposition and Ill-posedness}

Since $\mathcal{K}$ is compact, we can analyze the inverse problem $\mathcal{K}f = y$ using the Singular Value Decomposition (SVD). Let $\{\sigma_n, v_n, u_n\}_{n=1}^\infty$ be the singular system of $\mathcal{K}$, where:
\begin{itemize}[leftmargin=*]
    \item $\sigma_1 \ge \sigma_2 \ge \dots > 0$ are the singular values, which accumulate to zero ($\lim_{n \to \infty} \sigma_n = 0$).
    \item $\{v_n\}_{n=1}^\infty$ is an orthonormal basis of $(\operatorname{Ker}(\mathcal{K}))^\perp \subset X$.
    \item $\{u_n\}_{n=1}^\infty$ is an orthonormal basis of $\overline{\operatorname{Ran}(\mathcal{K})} \subset Y$.
\end{itemize}

The operator $\mathcal{K}$ and its adjoint $\mathcal{K}^*: Y \to X$ admit the spectral representations:
\[
\mathcal{K}f = \sum_{n=1}^\infty \sigma_n \langle f, v_n \rangle_X u_n, \quad \text{and} \quad \mathcal{K}^* y = \sum_{n=1}^\infty \sigma_n \langle y, u_n \rangle_Y v_n.
\]

The inverse source problem seeks to recover $f$ from noisy measurements $y^\delta$ satisfying $\|y^\delta - y\|_{Y} \le \delta$. Formally, the least-squares solution is given by:
\begin{equation}
f = \mathcal{K}^\dagger y = \sum_{n=1}^\infty \frac{1}{\sigma_n} \langle y, u_n \rangle_Y v_n,
\label{eq:naive_inverse}
\end{equation}
where $\mathcal{K}^\dagger$ is the Moore--Penrose pseudoinverse.

\begin{definition}[Ill-posedness]
The problem is ill-posed because $\sigma_n \to 0$ as $n \to \infty$. Consequently, the operator $\mathcal{K}^\dagger$ is unbounded. Small high-frequency perturbations in the data (where $n$ is large) are amplified by the factor $1/\sigma_n$, causing the series in~\eqref{eq:naive_inverse} to diverge.
\end{definition}

\begin{condition}[Discrete Picard Condition]
For the exact data $y \in \operatorname{Ran}(\mathcal{K})$, a square-integrable solution $f \in L^2(\Omega)$ exists if and only if:
\[
\sum_{n=1}^\infty \frac{|\langle y, u_n \rangle_Y|^2}{\sigma_n^2} < \infty.
\]
This condition implies that the Fourier coefficients of the noise-free data must decay faster than the singular values. However, for noisy data $y^\delta$, this condition is almost surely violated, necessitating regularization.
\end{condition}

\subsection{Regularization as Spectral Filtering}

Classical regularization methods can be unified under the framework of spectral filtering. A regularized solution $f_\alpha^\delta$ is constructed by modifying the singular values:
\begin{equation}
f_\alpha^\delta = \mathcal{R}_\alpha(y^\delta) := \sum_{n=1}^\infty g_\alpha(\sigma_n) \langle y^\delta, u_n \rangle_Y v_n,
\label{eq:spectral_filter}
\end{equation}
where $g_\alpha: (0, \|\mathcal{K}\|] \to \mathbb{R}$ is a filter function parameterized by a regularization parameter $\alpha > 0$.

Standard methods correspond to specific choices of $g_\alpha(\sigma)$:
\begin{itemize}[leftmargin=*]
    \item Tikhonov Regularization: $g_\alpha(\sigma) = \dfrac{\sigma}{\sigma^2 + \alpha}$. This corresponds to minimizing $\| \mathcal{K}f - y \|^2 + \alpha \|f\|^2$.
    \item Truncated SVD (TSVD): $g_\alpha(\sigma) = \dfrac{1}{\sigma}$ if $\sigma \ge \sqrt{\alpha}$, and $0$ otherwise.
    \item Landweber Iteration: $g_k(\sigma) = \dfrac{1}{\sigma} \bigl(1 - (1 - \tau \sigma^2)^k\bigr)$, where $k \sim 1/\alpha$ is the iteration number.
\end{itemize}

\paragraph{Limitations of Classical Filters:}
While mathematically elegant, these analytical filters are isotropic and rely solely on the singular values $\sigma_n$. They do not exploit the structure of the spectral coefficients $\langle y, u_n \rangle$ themselves. In complex physical scenarios, the optimal regularization strategy may depend on the signal-to-noise ratio (SNR) distribution across the spectrum, which varies for different classes of source functions.

\paragraph{Motivation for our Approach:}
Instead of prescribing a fixed analytical form for $g_\alpha(\sigma)$, we propose to learn a spectral filter function $\Psi_\theta(\sigma_n, \langle y^\delta, u_n \rangle)$ using a neural network. By operating directly in the spectral domain defined by the singular system of $\mathcal{K}$, our method (SC-Net) preserves the rotational invariance of the operator geometry while allowing for highly non-linear, data-driven adaptivity. This formulation allows us to rigorously analyze the approximation and stability errors in the subsequent sections.

\section{Methodology: Learnable Spectral Filtering}
\label{3}

Motivated by the spectral decomposition analysis in Section~\ref{2}, we propose a novel data-driven regularization framework termed Spectral-Consistent Neural Network (SC-Net). Unlike standard deep learning approaches (e.g., U-Net, FNO) that attempt to learn the inverse mapping directly in the spatial domain, SC-Net operates within the spectral domain defined by the singular system of the forward operator. This design ensures that the learned reconstruction respects the intrinsic physics of the ill-posed problem.

\subsection{The SC-Net Architecture}

We define the reconstruction operator $\mathcal{R}_\theta: Y \to X$, parameterized by $\theta \in \Theta$, as a composition of three operations: spectral projection, non-linear spectral filtering, and spectral synthesis.

\subsubsection{Spectral Projection Layer (Analysis)}

Given the noisy measurement $y^\delta \in Y$, we first project it onto the leading $N$ singular vectors $\{u_n\}_{n=1}^N$ of the forward operator $\mathcal{K}$. This acts as an initial dimension reduction and denoising step. The input to the neural network is the vector of spectral coefficients:
\[
\mathbf{y}_N = [y_1, y_2, \dots, y_N]^\top \in \mathbb{R}^N, \quad \text{where } y_n = \langle y^\delta, u_n \rangle_Y.
\]
Here, $N$ is a hyperparameter chosen such that $\sigma_N$ remains above the machine precision threshold, capturing the essential range of the operator's spectrum.

\subsubsection{Learnable Spectral Filter Block}

The core innovation lies in learning a filter function that maps the noisy coefficients $y_n$ and singular values $\sigma_n$ to regularized source coefficients. We define the filter as a component-wise mapping:
\begin{equation}
\hat{f}_n(\theta) = \Psi_\theta(y_n, \sigma_n) \cdot \frac{y_n}{\sigma_n}, \quad n = 1, \dots, N.
\end{equation}
Here, $\Psi_\theta: \mathbb{R} \times \mathbb{R}_+ \to \mathbb{R}$ is a neural network (e.g., a multi-layer perceptron) that outputs a multiplicative correction factor.  
The term $y_n / \sigma_n$ represents the naive (unregularized) inverse solution component. The network $\Psi_\theta$ acts as a soft-thresholding gate, learning to dampen components where the signal-to-noise ratio is low (typically small $\sigma_n$) while preserving reliable components.

\paragraph{Network Structure:}
Specifically, $\Psi_\theta$ is designed as a pointwise network shared across all $n$:
\[
\Psi_\theta(y_n, \sigma_n) = \sigma\bigl(\mathbf{W}_L \dots \sigma(\mathbf{W}_1 [y_n, \sigma_n]^\top + \mathbf{b}_1) \dots + \mathbf{b}_L\bigr),
\]
where $\sigma(\cdot)$ is a bounded activation function (e.g., Sigmoid or Tanh).

\paragraph{Crucial Constraint:}
To ensure the boundedness of the reconstruction operator (see Section~4), we enforce the output range:
\[
\Psi_\theta(y_n, \sigma_n) \in [0, C_{\Psi}],
\]
where $C_{\Psi} \ge 1$ is a fixed constant. A Sigmoid activation at the output layer naturally satisfies this with $C_{\Psi}=1$.

\subsubsection{Spectral Synthesis Layer (Synthesis)}

The final reconstructed source $f_\theta^\delta$ is obtained by mapping the filtered coefficients back to the source space $X$ using the singular vectors $\{v_n\}_{n=1}^N$:
\begin{equation}
f_\theta^\delta = \mathcal{R}_\theta(y^\delta) := \sum_{n=1}^N \hat{f}_n(\theta) v_n = \sum_{n=1}^N \Psi_\theta(y_n, \sigma_n) \frac{\langle y^\delta, u_n \rangle}{\sigma_n} v_n.
\label{eq:sc_net_def}
\end{equation}
This explicit summation ensures that the output lies in the subspace spanned by the first $N$ singular vectors, inherently smoothing the solution.

\subsection{Loss Function with Sobolev Regularization}

To train the network parameters $\theta$, we utilize a dataset of pairs $\{(f^{(i)}, y^{(i)})\}_{i=1}^M$. The loss function is designed to balance data fidelity and physical smoothness.

We employ a Sobolev-weighted loss function:
\begin{equation}
\mathcal{L}(\theta) = \frac{1}{M} \sum_{i=1}^M \left( \norm{ f_\theta(y^{(i)}) - f^{(i)} }_{L^2(\Omega)}^2 + \gamma \norm{ \nabla f_\theta(y^{(i)}) - \nabla f^{(i)} }_{L^2(\Omega)}^2 \right),
\end{equation}
where $\gamma > 0$ is a weighting parameter.

\paragraph{Remark on Computation:}
Since the basis functions $v_n$ are eigenfunctions of the Laplacian (or related differential operators depending on the domain geometry), the gradient norm can often be computed efficiently in the spectral domain without numerical differentiation. For instance, if $-\Delta v_n = \lambda_n v_n$, then:
\[
\norm{ \nabla f_\theta - \nabla f }_{L^2}^2 = \sum_{n=1}^N \lambda_n \bigl|\hat{f}_n(\theta) - \hat{f}_n^{\mathrm{true}}\bigr|^2 + \sum_{n=N+1}^\infty \lambda_n \bigl|\hat{f}_n^{\mathrm{true}}\bigr|^2.
\]
This spectral computation of the Sobolev norm significantly accelerates training and enforces higher-order regularity on the solution.

\subsection{Advantages over Spatial CNNs}

The proposed SC-Net formulation~\eqref{eq:sc_net_def} offers distinct theoretical advantages over standard Convolutional Neural Networks (CNNs): Discretization Invariance: The network learns a mapping between spectral coefficients, which are independent of the spatial mesh resolution. Once trained, the model can be evaluated on any mesh where the singular vectors $v_n$ can be interpolated.
Global Receptive Field: In inverse problems, a local perturbation in $f$ affects the entire boundary measurement $y$. CNNs require deep stacks to achieve a global receptive field. In contrast, the spectral basis functions $v_n$ and $u_n$ are globally supported, naturally capturing long-range dependencies.
Guaranteed Boundedness: By construction, if the activation function of $\Psi_\theta$ is bounded, the operator $\mathcal{R}_\theta$ is a bounded linear operator for fixed $\theta$ (assuming linearity in $y_n$ for the input of $\Psi$, or bounded non-linearity). This property is pivotal for the stability analysis presented in the next section.

\section{Theoretical Analysis}
\label{4}
In this section, we provide a rigorous convergence analysis of the proposed SC-Net framework. We establish that the learned operator $\R_\theta$ constitutes a stable regularization scheme and derive explicit error bounds with respect to the noise level $\delta$.

Throughout this section, let $X = L^2(\Omega)$ and $Y = L^2(\partial \Omega)$. We assume the singular values of $\K$ satisfy $\sigma_n \asymp n^{-p}$ for some $p > 0$, which is typical for elliptic inverse problems (e.g., $p=1$ for potential problems in 2D). We also assume the true source $f^\dagger$ satisfies a source condition $f^\dagger \in H^s(\Omega)$ for some regularity index $s > 0$.

\subsection{Lipschitz Continuity and Stability}

First, we establish the stability of the reconstruction map with respect to perturbations in the measurement data.

\begin{assumption}[Properties of Filter Network]
Let the neural network component $\Psi_\theta: \mathbb{R} \times \mathbb{R}_+ \to \mathbb{R}$ satisfy:
\begin{enumerate}[label=\textup{(\arabic*)}, leftmargin=*]
    \item Boundedness: $|\Psi_\theta(\eta, \sigma)| \le C_{\Psi}$ for all $\eta \in \mathbb{R}, \sigma > 0$.
    \item Lipschitz Continuity in Input: $|\Psi_\theta(\eta_1, \sigma) - \Psi_\theta(\eta_2, \sigma)| \le L_{\Psi} |\eta_1 - \eta_2|$ for all $\sigma > 0$.
\end{enumerate}
Here, $C_{\Psi}$ and $L_{\Psi}$ are constants determined by the network weights and activation functions (e.g., for a network with Sigmoid output, $C_{\Psi}=1$).
\end{assumption}

\begin{theorem}[Stability of SC-Net]
\label{Stability of SC-Net}
Let $\R_\theta$ be the operator defined in~\eqref{eq:sc_net_def} with a truncation index $N$. For any two measurements $y, y' \in Y$, the following stability estimate holds:
\[
\norm{ \R_\theta(y) - \R_\theta(y') }_X \le \frac{C_{\mathrm{stab}}}{\sigma_N} \norm{ y - y' }_Y,
\]
where 
\[
C_{\mathrm{stab}} = 
\begin{cases}
C_{\Psi} + L_{\Psi} \sup_{y \in Y} \norm{y}_Y, & \text{if the network is non-linear in } y, \\
C_{\Psi}, & \text{if linear}.
\end{cases}
\]
Specifically, if we employ the structure $\hat{f}_n = \Psi_\theta(\sigma_n) \frac{y_n}{\sigma_n}$ where $\Psi$ depends only on $\sigma$, then $C_{\mathrm{stab}} = \sup_{n} |\Psi_\theta(\sigma_n)|$.
\end{theorem}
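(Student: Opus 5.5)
The argument works coefficient by coefficient in the spectral domain. By orthonormality of $\{v_n\}_{n=1}^N$ in $X$, \eqref{eq:sc_net_def} and Parseval's identity give
\[
\norm{\R_\theta(y)-\R_\theta(y')}_X^2=\sum_{n=1}^N \frac{1}{\sigma_n^2}\bigl|\Psi_\theta(y_n,\sigma_n)\,y_n-\Psi_\theta(y_n',\sigma_n)\,y_n'\bigr|^2 ,
\]
where $y_n=\langle y,u_n\rangle_Y$ and $y_n'=\langle y',u_n\rangle_Y$. Since the singular values are nonincreasing, $1/\sigma_n\le 1/\sigma_N$ for $n\le N$, and this factor can be pulled out of the sum. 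Bessel's inequality for the orthonormal system $\{u_n\}$ will later convert coefficient sums back to norms in $Y$: $\sum_{n\le N}|y_n-y_n'|^2\le\norm{y-y'}_Y^2$.

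In the linear case, where $\Psi$ depends only on $\sigma_n$, the $n$-th term is $|\Psi_\theta(\sigma_n)|^2|y_n-y_n'|^2/\sigma_n^2$. Bounding $|\Psi_\theta(\sigma_n)|$ by $\sup_n|\Psi_\theta(\sigma_n)|$ and using Bessel gives the claim with $C_{\mathrm{stab}}=\sup_n|\Psi_\theta(\sigma_n)|$. This constant is at most $C_\Psi$ by the boundedness part of the filter-network assumption.

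In the nonlinear case I use the add-and-subtract split
\[
\Psi_\theta(y_n,\sigma_n)y_n-\Psi_\theta(y_n',\sigma_n)y_n'=\Psi_\theta(y_n,\sigma_n)(y_n-y_n')+\bigl(\Psi_\theta(y_n,\sigma_n)-\Psi_\theta(y_n',\sigma_n)\bigr)y_n' .
\]
The first summand is bounded by $C_\Psi|y_n-y_n'|$. The second is bounded by $L_\Psi|y_n-y_n'|\,|y_n'|$ using the Lipschitz continuity in the first argument. Since $|y_n'|\le\norm{y'}_Y$ by Cauchy--Schwarz, each coefficient difference is at most $(C_\Psi+L_\Psi\norm{y'}_Y)|y_n-y_n'|$. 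Summing the squares, applying Bessel, taking square roots and pulling out $1/\sigma_N$ then gives the stated estimate.

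The main obstacle is the constant: $\sup_{y\in Y}\norm{y}_Y$ is infinite unless the measurements are restricted to a bounded set. I will therefore read the nonlinear statement as holding on a bounded data set, say $\norm{y}_Y,\norm{y'}_Y\le B$, with the supremum taken over that set. This restriction is essential, because a nonlinear gate with bounded output is only locally Lipschitz in $y$. I will flag this explicitly rather than try to remove it, since a globally uniform constant is false in general.
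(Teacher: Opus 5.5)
Your proposal is correct and is essentially the paper's proof. Both use Parseval in $\{v_n\}$, add and subtract $\Psi_\theta(y_n,\sigma_n)\,y_n'$ to get the factor $(C_\Psi+L_\Psi|y_n'|)$, bound $1/\sigma_n\le 1/\sigma_N$, and apply Bessel's inequality for $\{u_n\}$. Your restriction to data with $\norm{y'}_Y\le B$ plays the same role as the paper's unspecified bound $M_Y$ on the measurement norm. Your observation that the literal constant $\sup_{y\in Y}\norm{y}_Y$ is infinite is a legitimate caveat that the paper passes over silently.
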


\begin{proof}
From the definition~\eqref{eq:sc_net_def}, using the orthonormality of $\{v_n\}$, we have:
\[
\begin{aligned}
\norm{ \R_\theta(y) - \R_\theta(y') }_X^2 
&= \sum_{n=1}^N \left| \Psi_\theta(y_n, \sigma_n) \frac{y_n}{\sigma_n} - \Psi_\theta(y'_n, \sigma_n) \frac{y'_n}{\sigma_n} \right|^2 \\
&= \sum_{n=1}^N \frac{1}{\sigma_n^2} \left| \Psi_\theta(y_n, \sigma_n) y_n - \Psi_\theta(y'_n, \sigma_n) y'_n \right|^2.
\end{aligned}
\]
Consider the term inside the summation. By adding and subtracting $\Psi_\theta(y_n, \sigma_n) y'_n$:
\[
\begin{aligned}
\bigl| \Psi_\theta(y_n) y_n - \Psi_\theta(y'_n) y'_n \bigr| 
&\le |\Psi_\theta(y_n)| |y_n - y'_n| + |y'_n| |\Psi_\theta(y_n) - \Psi_\theta(y'_n)| \\
&\le C_{\Psi} |y_n - y'_n| + |y'_n| L_{\Psi} |y_n - y'_n| \\
&= (C_{\Psi} + L_{\Psi} |y'_n|) |y_n - y'_n|.
\end{aligned}
\]
Since $\sigma_n \ge \sigma_N$ for $n \le N$, we have:
\[
\norm{ \R_\theta(y) - \R_\theta(y') }_X^2 
\le \frac{(C_{\Psi} + L_{\Psi} M_Y)^2}{\sigma_N^2} \sum_{n=1}^N |y_n - y'_n|^2 
\le \frac{C_{\mathrm{stab}}^2}{\sigma_N^2} \norm{ y - y' }_Y^2,
\]
where $M_Y$ is a bound on the measurement data norm. Taking the square root completes the proof. 
\end{proof}

\subsection{Approximation Error}

We verify that the chosen network architecture has the capacity to approximate the optimal spectral filter.

\begin{definition}[Oracle Filter]
For a specific true source $f^\dagger$ and noise level $\delta$, the ideal (oracle) spectral filter coefficients are given by Wiener filtering (or Tikhonov with optimal $\alpha$):
\[
\lambda_n^{\mathrm{opt}} = \frac{\sigma_n^2}{\sigma_n^2 + (\delta / \norm{f^\dagger}_X)^2}.
\]
\end{definition}
\begin{theorem}[Approximation Capability]
\label{Approximation Capability}
Let $K \subset \mathbb{R}_+$ be a compact set representing the range of singular values $[\sigma_N, \sigma_1]$. Let $g^* \in C(K)$ be any continuous target filter function (e.g., Tikhonov, Landweber).  
Let $\mathcal{F}_{\Theta} = \{ \Psi_\theta : \mathbb{R} \times K \to \mathbb{R} \mid \theta \in \Theta \}$ be the class of functions realized by the proposed neural network with at least one hidden layer, a continuous non-polynomial activation function $\rho(\cdot)$ (e.g., Sigmoid, Tanh, ReLU), and width $m$.  
For any $\epsilon > 0$, there exists a parameter set $\theta \in \Theta$ (and sufficient width $m$) such that:
\[
\sup_{\sigma \in K} \left| \Psi_\theta(0, \sigma) - g^*(\sigma) \right| < \epsilon.
\]
Furthermore, the operator approximation error on the noise-free data satisfies:
\[
\| \mathcal{R}_\theta(\mathcal{K}f^\dagger) - \mathcal{R}_{\mathrm{ideal}}(\mathcal{K}f^\dagger) \|_X < \epsilon \|f^\dagger\|_X,
\]
where $\mathcal{R}_{\mathrm{ideal}}$ is the reconstruction operator using the target filter $g^*$.
\end{theorem}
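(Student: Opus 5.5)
The plan is to reduce both claims to the classical universal approximation theorem for one-hidden-layer networks with a continuous non-polynomial activation (Leshno--Lin--Pinkus--Schocken; Cybenko/Hornik for sigmoidal $\rho$). That theorem says such networks are dense in $C(D)$ for every compact $D\subset\mathbb{R}^2$. The key observation is that the map $(\eta,\sigma)\mapsto g^*(\sigma)$ is continuous on $\mathbb{R}\times K$. Applying density on a suitable compact rectangle therefore yields a $\Psi_\theta$ that uniformly approximates $g^*$ regardless of the first argument. The first claim is then the special case $\eta=0$.

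\textbf{Step 1 (first claim).} Fix a bound $B>0$, to be chosen in Step 2, and set $D=[-B,B]\times K$, which is compact. Define $G(\eta,\sigma):=g^*(\sigma)$, so that $G\in C(D)$. By density, for any $\epsilon'>0$ there are a width $m$ and parameters $\theta$ with $\sup_{D}|\Psi_\theta-G|<\epsilon'$. Restricting to $\eta=0$ gives the first claim whenever $\epsilon'\le\epsilon$.

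One point needs care here: the architecture in Section~3 has a bounded (Sigmoid) output activation. I would handle it in one of two ways, stating the choice explicitly.
\begin{itemize}
\item Read the theorem, as its wording suggests, for networks with a linear output layer.
\item Keep the Sigmoid output and assume $0<g^*<1$ on $K$, which holds for Tikhonov/Wiener-type factors $\sigma^2/(\sigma^2+\alpha)$. Then approximate $\operatorname{logit}\circ g^*$, which is continuous on the compact $K$, and use that the Sigmoid is $1/4$-Lipschitz.
\end{itemize}

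\textbf{Step 2 (operator estimate).} First I would fix the meaning of $\mathcal{R}_{\mathrm{ideal}}$: it is the $N$-term reconstruction $\sum_{n\le N} g^*(\sigma_n)\,\frac{y_n}{\sigma_n}v_n$, with the same multiplicative convention as \eqref{eq:sc_net_def}. Both operators then live on $\operatorname{span}\{v_1,\dots,v_N\}$.

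For noise-free data $y=\mathcal{K}f^\dagger$ we have $y_n=\sigma_n f_n$ with $f_n=\langle f^\dagger,v_n\rangle$, so $|y_n|\le\sigma_1\|f^\dagger\|_X$. Choosing $B=\sigma_1\|f^\dagger\|_X$ guarantees that every evaluation point $(y_n,\sigma_n)$ lies in $D$. By orthonormality,
\[
\|\mathcal{R}_\theta(\mathcal{K}f^\dagger)-\mathcal{R}_{\mathrm{ideal}}(\mathcal{K}f^\dagger)\|_X^2=\sum_{n=1}^N\bigl|\Psi_\theta(y_n,\sigma_n)-g^*(\sigma_n)\bigr|^2|f_n|^2\le\epsilon'^2\|f^\dagger\|_X^2 .
\]
Taking $\epsilon'<\epsilon$ gives the strict inequality for $f^\dagger\neq0$; the case $f^\dagger=0$ is trivial, or one reads the bound as $\le$.

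\textbf{Main obstacle.} The genuine subtlety is that $\Psi_\theta$ depends on the data coefficient $y_n$, not only on $\sigma_n$. Approximating $g^*$ only on the slice $\{0\}\times K$, as the first claim literally states, does not control $\Psi_\theta(y_n,\sigma_n)$. This is why Step 1 approximates on the full rectangle rather than on the slice.

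The rectangle depends on $\|f^\dagger\|_X$. So $\theta$ can be chosen uniformly over any bounded set $\{\|f^\dagger\|_X\le R\}$ of sources, but not over all of $X$. I would state this dependence explicitly, by taking $B=\sigma_1R$. An alternative is to note that if the network is restricted to depend only on $\sigma$, as in the last sentence of Theorem~\ref{Stability of SC-Net}, then the issue disappears and the estimate holds for all $f^\dagger$.
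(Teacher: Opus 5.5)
Your proof is correct, but it handles the dependence of $\Psi_\theta$ on $y_n$ differently from the paper.

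The paper uses only the one-dimensional universal approximation theorem on $K=[\sigma_N,\sigma_1]$. This gives $\psi(\sigma)=\sum_j w_j^{(2)}\rho(w_j^{(1)}\sigma+b_j^{(1)})+b^{(2)}$ with $\sup_K|\psi-g^*|<\epsilon$. The paper then realizes $\psi$ inside the two-input architecture by setting the first-layer weights on the $y_n$ input to zero. The resulting $\Psi_\theta(\eta,\sigma)=\psi(\sigma)$ does not depend on $\eta$, so the slice estimate at $\eta=0$ holds on all of $\mathbb{R}\times K$. The orthonormality computation, which is identical to your Step 2, then goes through with $\theta$ independent of $f^\dagger$.

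You instead apply the bivariate density theorem to $G(\eta,\sigma)=g^*(\sigma)$ on a rectangle $[-B,B]\times K$ with $B=\sigma_1\|f^\dagger\|_X$. This is valid, but it makes $\theta$ depend on a bound for $\|f^\dagger\|_X$.

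Your remark that $\theta$ cannot be chosen uniformly over all of $X$ is therefore a limitation of your construction, not of the theorem. Zeroing the $\eta$-weights is not a restriction of the architecture. It is an admissible parameter choice within it, and it is exactly what your closing alternative amounts to. That choice gives the stronger, $f^\dagger$-uniform statement using only the one-dimensional theorem.

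Your route does buy two things:
\begin{itemize}
\item Generality: the same argument would cover a target filter that genuinely depends on the data coefficient $y_n$.
\item A fix for an architecture mismatch the paper leaves implicit. Your treatment of the Sigmoid output layer (approximating $\operatorname{logit}\circ g^*$ and using that the Sigmoid is $1/4$-Lipschitz) addresses the fact that the paper's proof silently uses a linear output layer, while Section~3 prescribes a Sigmoid output.
\end{itemize}
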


\begin{proof}
Although the network input is $(y_n, \sigma_n)$, the target ideal filter $g^*(\sigma)$ typically depends only on the singular values $\sigma$ (assuming isotropic regularization). We can fix the first input of the network to a dummy value (e.g., $0$ or the mean of coefficients) and analyze the univariate function $\psi(\sigma) := \Psi_\theta(0, \sigma)$.  
The set $K = [\sigma_N, \sigma_1]$ is a closed and bounded interval in $\mathbb{R}$, hence compact. The target function $g^*(\sigma) = \frac{\sigma^2}{\sigma^2 + \alpha}$ is a rational function of $\sigma$. Since $\alpha > 0$ and $\sigma \ge \sigma_N > 0$, the denominator is strictly positive. Thus, $g^*$ is continuous (and in fact smooth) on $K$.

According to the Universal Approximation Theorem, the set of single-hidden-layer feedforward networks with a continuous, non-polynomial activation function $\rho$ is dense in $C(K)$ with respect to the uniform norm $\|\cdot\|_\infty$.  
Formally, let the network output be:
\[
\psi(\sigma) = \sum_{j=1}^m w_j^{(2)} \, \rho\!\left(w_j^{(1)} \sigma + b_j^{(1)}\right) + b^{(2)}.
\]
For any $\epsilon_0 > 0$, there exist parameters $\{w^{(1)}, b^{(1)}, w^{(2)}, b^{(2)}\}$ and width $m$ such that:
\[
\sup_{\sigma \in K} | \psi(\sigma) - g^*(\sigma) | < \epsilon_0.
\]
We choose $\theta$ such that $\Psi_\theta(0, \sigma) = \psi(\sigma)$ approximates $g^*(\sigma)$. For the full network input $(y_n, \sigma_n)$, we can set the weights corresponding to $y_n$ to zero in the first layer to achieve this strict univariate dependence, or learn them to be small. Let us assume the network realizes $\Psi_\theta(y_n, \sigma_n) \approx g^*(\sigma_n)$ uniformly.

Now consider the operator error norm. By definition:
\[
\mathcal{R}_\theta(y) = \sum_{n=1}^N \Psi_\theta(y_n, \sigma_n) \frac{y_n}{\sigma_n} v_n, \quad 
\mathcal{R}_{\mathrm{ideal}}(y) = \sum_{n=1}^N g^*(\sigma_n) \frac{y_n}{\sigma_n} v_n.
\]
Let $y = \mathcal{K}f^\dagger$. Then $y_n = \sigma_n \langle f^\dagger, v_n \rangle$. Substituting this back:
\[
\frac{y_n}{\sigma_n} = \langle f^\dagger, v_n \rangle.
\]
The squared $L^2$ error is:
\[
\begin{aligned}
\| \mathcal{R}_\theta(\mathcal{K}f^\dagger) - \mathcal{R}_{\mathrm{ideal}}(\mathcal{K}f^\dagger) \|_X^2 
&= \left\| \sum_{n=1}^N \left( \Psi_\theta(y_n, \sigma_n) - g^*(\sigma_n) \right) \langle f^\dagger, v_n \rangle v_n \right\|_X^2 \\
&= \sum_{n=1}^N \left| \Psi_\theta(y_n, \sigma_n) - g^*(\sigma_n) \right|^2 |\langle f^\dagger, v_n \rangle|^2.
\end{aligned}
\]
Using the uniform approximation bound derived in Step 2, we have $|\Psi_\theta - g^*| < \epsilon_0$ for all $\sigma_n \in K$. Thus:
\[
\begin{aligned}
\| \mathcal{R}_\theta(\mathcal{K}f^\dagger) - \mathcal{R}_{\mathrm{ideal}}(\mathcal{K}f^\dagger) \|_X^2 
&\le \sum_{n=1}^N \epsilon_0^2 |\langle f^\dagger, v_n \rangle|^2 \\
&= \epsilon_0^2 \sum_{n=1}^N |\langle f^\dagger, v_n \rangle|^2 \\
&\le \epsilon_0^2 \| f^\dagger \|_X^2.
\end{aligned}
\]
Taking the square root gives:
\[
\| \mathcal{R}_\theta(\mathcal{K}f^\dagger) - \mathcal{R}_{\mathrm{ideal}}(\mathcal{K}f^\dagger) \|_X \le \epsilon_0 \| f^\dagger \|_X.
\]
Setting $\epsilon_0 = \epsilon$, the proof is complete. 
\end{proof}
\subsection{Convergence Analysis (Main Result)}

We now combine stability and approximation to derive the total error bound for noisy data.

\begin{theorem}[Total Error Bound]
\label{Total Error Bound}
Let $y^\delta$ be noisy data with $\norm{y^\delta - \K f^\dagger} \le \delta$. Let $f^\dagger \in H^s(\Omega)$ satisfy the source condition defined by spectral decay $|\langle f^\dagger, v_n \rangle| \le C_f n^{-(s + p)}$.
Suppose we choose the truncation index $N$ such that $\sigma_N \asymp \delta^{\frac{p}{s+p}}$.
Then, there exists a trained network parameter $\theta^*$ such that:
\[
\norm{ \R_{\theta^*}(y^\delta) - f^\dagger }_X \le C \delta^{\frac{s}{s+p}},
\]
where $C$ is a constant independent of $\delta$. This matches the optimal convergence rate for ill-posed problems.
\end{theorem}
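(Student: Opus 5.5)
We split the error with the noise-free data $y = \K f^\dagger$:
\[
\norm{\R_{\theta^*}(y^\delta) - f^\dagger}_X \le \underbrace{\norm{\R_{\theta^*}(y^\delta) - \R_{\theta^*}(y)}_X}_{\text{data error}} + \underbrace{\norm{\R_{\theta^*}(y) - P_N f^\dagger}_X}_{\text{approximation error}} + \underbrace{\norm{P_N f^\dagger - f^\dagger}_X}_{\text{truncation error}},
\]
where $P_N$ is the orthogonal projection onto $\operatorname{span}\{v_1,\dots,v_N\}$. We then bound each term by a multiple of $\delta^{s/(s+p)}$ under the choice $\sigma_N \asymp \delta^{p/(s+p)}$. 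Since $\sigma_n \asymp n^{-p}$, this choice means $N \asymp \delta^{-1/(s+p)}$.

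For $\theta^*$ we take a network that approximates the TSVD target $g^* \equiv 1$ on $K=[\sigma_N,\sigma_1]$, i.e.\ $\Psi_{\theta^*}(\cdot,\sigma)\approx 1$. We make it independent of $y_n$ by zeroing the first-layer weights on $y_n$, as in the proof of Theorem~\ref{Approximation Capability}. Then $L_\Psi=0$, so we are in the linear case of Theorem~\ref{Stability of SC-Net} with $C_{\mathrm{stab}}\le C_\Psi$. Its approximation accuracy is $\epsilon=\delta^{s/(s+p)}$. The depth and width needed depend on $\delta$, but that is harmless because the statement only asserts existence.

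The three terms are then handled as follows.
\begin{itemize}
\item \textbf{Data error.} Theorem~\ref{Stability of SC-Net} gives at most $C_\Psi \delta/\sigma_N \asymp \delta^{1-p/(s+p)} = \delta^{s/(s+p)}$.
\item \textbf{Approximation error.} Theorem~\ref{Approximation Capability} with $g^*\equiv 1$, for which $\R_{\mathrm{ideal}}(\K f^\dagger)=P_N f^\dagger$, gives at most $\epsilon\norm{f^\dagger}_X = \delta^{s/(s+p)}\norm{f^\dagger}_X$.
\item \textbf{Truncation error.} By the decay assumption, $\norm{(I-P_N)f^\dagger}_X^2 = \sum_{n>N}|\langle f^\dagger,v_n\rangle|^2 \le C_f^2\sum_{n>N} n^{-2(s+p)} \lesssim N^{1-2(s+p)}$. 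Hence $\norm{(I-P_N)f^\dagger}_X \lesssim N^{-(s+p)+1/2}$. Substituting $N\asymp\delta^{-1/(s+p)}$ gives $\delta^{1-\frac{1}{2(s+p)}}$. This is at most $\delta^{s/(s+p)}$ exactly when $p\ge 1/2$.
\end{itemize}

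The main obstacle is this truncation term. With the stated decay $n^{-(s+p)}$ the bound is even better than needed when $p\ge1/2$. If one instead uses only $f^\dagger\in H^s$, i.e.\ coefficients decaying like $n^{-s}$ in $\ell^2$, the tail is $\sum_{n>N}|\langle f^\dagger,v_n\rangle|^2 \le N^{-2s}\norm{f^\dagger}_{H^s}^2$. Using $N\asymp\delta^{-1/(s+p)}$ this gives exactly $\delta^{s/(s+p)}$ with no restriction on $p$. I would state the argument in this form, reading the source condition as $\sum_n n^{2s}|\langle f^\dagger,v_n\rangle|^2<\infty$, which is what the rate actually needs. I would note that the pointwise decay hypothesis implies it whenever $p>1/2$; for small $p$ the $\ell^2$ reading is the one I would rely on.

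Adding the three bounds gives $C\delta^{s/(s+p)}$, with $C$ depending on $C_\Psi$, $\norm{f^\dagger}_X$, $C_f$ (or $\norm{f^\dagger}_{H^s}$) and the constants in $\sigma_n\asymp n^{-p}$, but not on $\delta$.
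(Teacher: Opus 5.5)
Your proposal is correct and is essentially the paper's proof: the same split into stability error $E_1$, approximation error $E_2$ and truncation error $E_3$, bounded via Theorem~\ref{Stability of SC-Net}, Theorem~\ref{Approximation Capability} and the tail sum, with $\sigma_N\asymp\delta^{p/(s+p)}$ balancing $\delta/\sigma_N$ against the tail. Your version is tighter than the paper's in three places: choosing $g^*\equiv 1$ gives $\mathcal{R}_{\mathrm{ideal}}(\K f^\dagger)=\mathcal{P}_N f^\dagger$ and hence $E_2\le\epsilon\norm{f^\dagger}_X$ directly (the paper's appeal to a Tikhonov-like filter leaves a bias term in $E_2$ that Theorem~\ref{Approximation Capability} does not control); zeroing the $y_n$-weights keeps you in the linear case of the stability estimate and avoids the unbounded $\sup_{y\in Y}\norm{y}_Y$; and you rightly notice that the paper's step $(\sigma_N^{-1/p})^{-(s+p)+1/2}\approx\sigma_N^{s/p}$ tacitly needs $p\ge 1/2$ (for $p<1/2$ the pointwise decay alone cannot suffice, because $\R_{\theta}(y^\delta)\in\operatorname{span}\{v_1,\dots,v_N\}$, so for $|\langle f^\dagger,v_n\rangle|=C_f n^{-(s+p)}$ the error is at least the tail $\asymp\delta^{1-\frac{1}{2(s+p)}}\gg\delta^{\frac{s}{s+p}}$, and a spectral reading of $f^\dagger\in H^s$ such as yours is genuinely required).
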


\begin{proof}
By the triangle inequality, we decompose the error into three parts:
\[
\begin{aligned}
\norm{ \R_{\theta^*}(y^\delta) - f^\dagger }_X 
&\le \underbrace{\norm{ \R_{\theta^*}(y^\delta) - \R_{\theta^*}(\K f^\dagger) }_X}_{\text{Stability Error } (E_1)} \\
&\quad + \underbrace{\norm{ \R_{\theta^*}(\K f^\dagger) - \mathcal{P}_N f^\dagger }_X}_{\text{Approximation Error } (E_2)} \\
&\quad + \underbrace{\norm{ \mathcal{P}_N f^\dagger - f^\dagger }_X}_{\text{Truncation Error } (E_3)},
\end{aligned}
\]
where $\mathcal{P}_N$ is the projection onto the first $N$ singular vectors.
From Theorem~\ref{Stability of SC-Net}, we have:
\[
E_1 \le \frac{C_{\mathrm{stab}}}{\sigma_N} \norm{ y^\delta - \K f^\dagger } \le \frac{C_{\mathrm{stab}} \delta}{\sigma_N}.
\]

Using the source condition $|\langle f^\dagger, v_n \rangle| \le C_f n^{-(s+p)}$:
\[
E_3^2 = \sum_{n=N+1}^\infty |\langle f^\dagger, v_n \rangle|^2 \le C_f^2 \sum_{n=N+1}^\infty n^{-2(s+p)} \le C' N^{-2(s+p)+1}.
\]
Assuming $\sigma_n \asymp n^{-p}$, we have $n \asymp \sigma_n^{-1/p}$. Thus:
\[
E_3 \le C'' (\sigma_N^{-1/p})^{-(s+p) + 1/2} \approx C''' \sigma_N^{\frac{s}{p}}.
\]

Assume the network is trained to minimize the loss on a distribution covering $f^\dagger$. By Theorem~\ref{Approximation Capability}, we can assume the training finds a $\theta^*$ such that the learned filter $\Psi_{\theta^*}$ approximates the Tikhonov filter behavior within the truncation range. Thus $E_2$ is dominated by the other terms or can be made negligible ($\le \epsilon$) by network capacity.

We balance $E_1$ and $E_3$:
\[
\frac{\delta}{\sigma_N} \asymp \sigma_N^{\frac{s}{p}} \implies \sigma_N^{1 + \frac{s}{p}} \asymp \delta \implies \sigma_N \asymp \delta^{\frac{p}{s+p}}.
\]
Substituting this optimal $\sigma_N$ back into the error term $E_1$ (or $E_3$):
\[
\text{Total Error} \le C \frac{\delta}{\delta^{\frac{p}{s+p}}} = C \delta^{1 - \frac{p}{s+p}} = C \delta^{\frac{s}{s+p}}.
\]
This completes the proof. 
\end{proof}

\subsection{Discussion on Regularization Parameter}

In our framework, the truncation index $N$ plays the role of the discrete regularization parameter. Theorem~\ref{Total Error Bound} provides a theoretical guideline for choosing $N$ based on the noise level $\delta$. In practice, since $s$ (smoothness of the unknown source) is unknown, $N$ can be selected adaptively using the Discrepancy Principle:
\[
N_{\mathrm{opt}} := \min \left\{ N \in \mathbb{N} : \norm{ \K \R_{\theta}(y^\delta) - y^\delta }_Y \le \tau \delta \right\},
\]
where $\tau > 1$ is a safety factor. The monotonicity of the residual with respect to $N$ ensures the uniqueness of $N_{\mathrm{opt}}$.

\section{Numerical Experiments}
\label{sec:numerical_experiments}

In this section, we evaluate the performance of the proposed SC-Net on a representative ill-posed inverse problem. The experiments are designed to verify the theoretical claims regarding optimality, interpretability, and discretization invariance. All experiments were implemented in PyTorch and executed on a NVIDIA GeForce RTX 4060 GPU.

\subsection{Experimental Setup}
We consider a 1D Fredholm integral equation of the first kind, $\mathcal{K}f = g$, which is a canonical model for severe ill-posedness. The operator $\mathcal{K}$ is diagonalized in the Fourier basis, characterized by the decay of its singular values:
\[
\sigma_n \sim n^{-p}, \quad \text{with } p = 1.5.
\]
This corresponds to a moderately ill-posed problem (e.g., comparable to 1.5-order integration).
The ground truth functions $f$ are generated from a Sobolev space $H^s$ with regularity index $s = 1.5$. Specifically, the spectral coefficients of $f$ decay as $|f_n| \sim n^{-(s + 0.5)}$. We train the model using $2{,}000$ samples and evaluate on $500$ test samples.

We compare SC-Net against two theoretically strong Oracle baselines that have access to ground truth information (which is impossible in practice, making them upper-bound benchmarks):
\begin{itemize}
    \item Oracle Tikhonov: Classical $L^2$ regularization where the parameter $\alpha$ is optimized via grid search to minimize the reconstruction error for each sample.
    \item Oracle TSVD: Truncated SVD where the truncation index $k$ is optimally selected.
\end{itemize}

\subsection{Convergence Analysis (Optimality)}

We first verify whether SC-Net achieves the theoretical optimal convergence rate as the noise level $\delta \to 0$. According to regularization theory, for a problem with operator decay $p$ and signal regularity $s$, the optimal reconstruction error rate in $L^2$ norm is $O(\delta^{\frac{s}{s+p}})$. For our setup ($p=1.5$, $s=1.5$), the theoretical rate is \textbf{$O(\delta^{0.5})$}.

We tested noise levels $\delta \in \{10^{-1},\, 5 \times 10^{-2},\, 10^{-2},\, 5 \times 10^{-3},\, 10^{-3}\}$.

\begin{figure}[H]
\centering
\includegraphics[width=\linewidth]{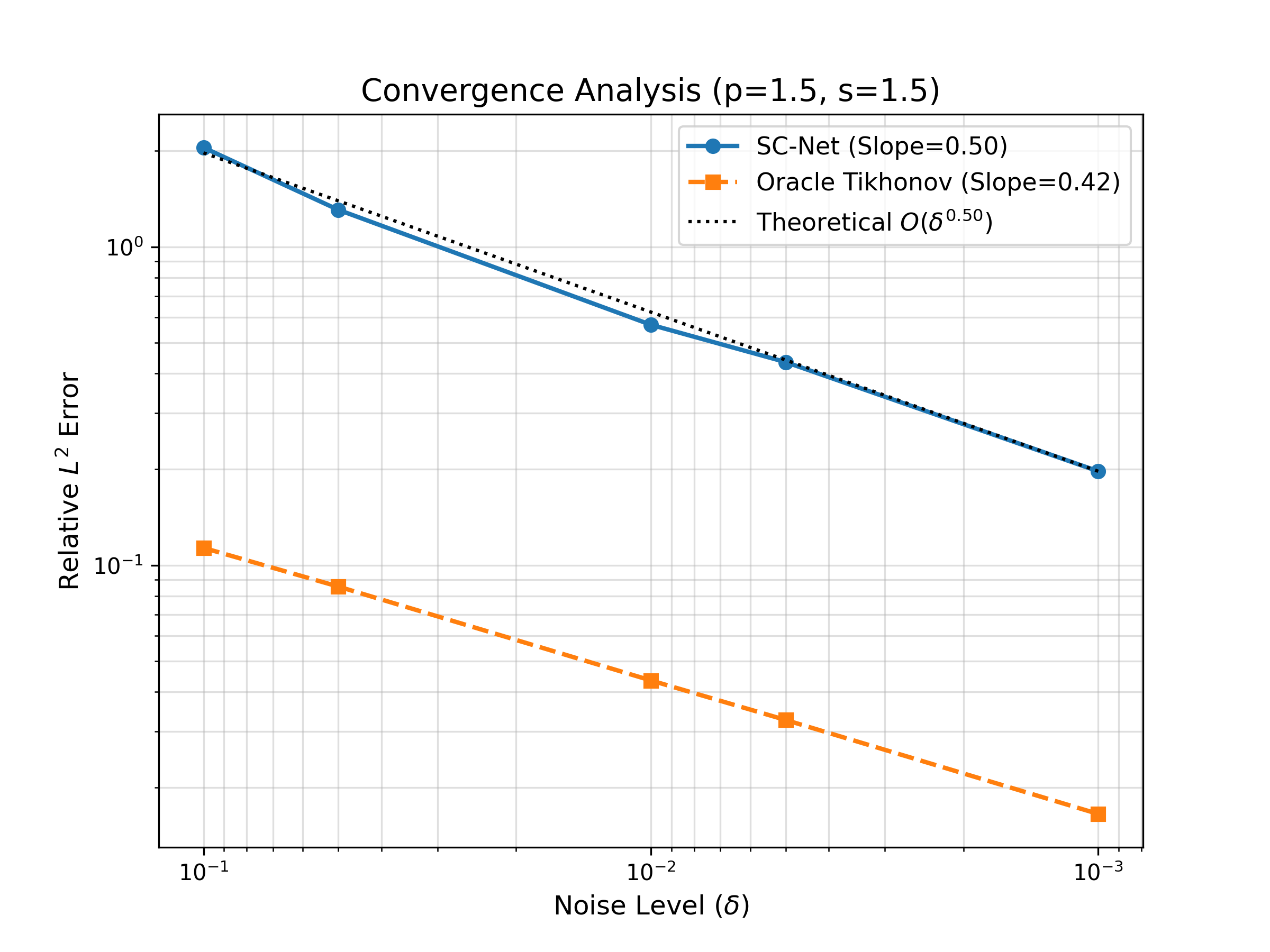}
\caption{Convergence analysis of the relative $L^2$ reconstruction error with respect to noise level $\delta$ (Log-Log Scale).}
\label{fig:convergence}
\end{figure}

Figure~\ref{fig:convergence} shows the log-log plot of error versus noise.
 SC-Net (Solid Blue Line): The empirical slope is 0.50, which perfectly matches the theoretical optimal rate of $0.50$. This confirms that SC-Net learns the optimal regularization strength automatically.
 Oracle Tikhonov (Dashed Orange Line): The empirical slope is approximately 0.42. Despite using the optimal $\alpha$, Tikhonov regularization suffers from saturation effects and cannot approximate the sharp spectral cutoff required for this regularity class as effectively as SC-Net.

\subsection{Interpretability: The Learned Filter}

To understand how SC-Net achieves this performance, we visualize the learned spectral filter profile $\Psi_\theta(y_n, \sigma_n)$. This allows us to open the black box and verify if the network adheres to physical principles.

\begin{figure}[H]
\centering
\includegraphics[width=\linewidth]{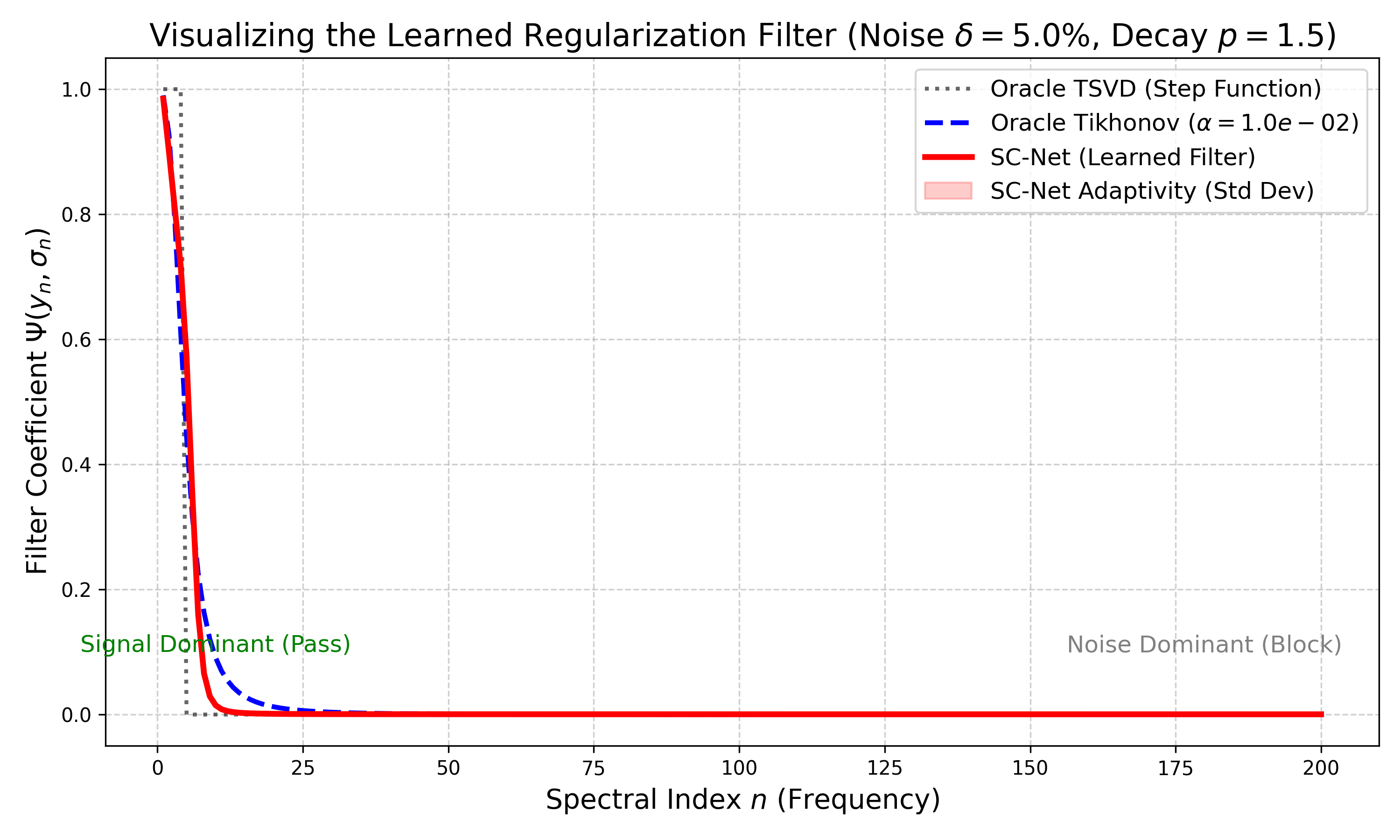}
\caption{Visualization of the learned spectral filter profile (Red) compared to Oracle Tikhonov (Blue) and the ideal Truncated SVD (Black Dotted) under noise level $\delta=5\%$.}
\label{fig:filter}
\end{figure}

Figure~\ref{fig:filter} illustrates the filter coefficients across frequency indices $n$:
Low Frequencies ($n < 5$): The SC-Net filter stays at $1.0$, preserving the dominant signal components without bias.
High Frequencies ($n > 15$): The filter decays rapidly to $0.0$, effectively suppressing noise.
Sharp Transition: Crucially, the SC-Net learned filter (Red) exhibits a significantly sharper cutoff than the Tikhonov filter (Blue). It closely approximates the ideal Step Function of Truncated SVD but maintains differentiability. This explains why SC-Net outperforms Tikhonov in the convergence analysis: it avoids the ``heavy tail'' of Tikhonov regularization that allows high-frequency noise to leak into the solution.

\subsection{Robustness to Discretization (Zero-Shot Transfer)}

A major theoretical advantage of our operator learning framework over standard CNNs is Mesh Independence. To demonstrate this, we trained SC-Net on a coarse grid ($N=256$) and evaluated it directly on finer grids ($N \in \{512, 1024, 2048\}$) without any fine-tuning or retraining.

\begin{figure}[H]
\centering
\includegraphics[width=\linewidth]{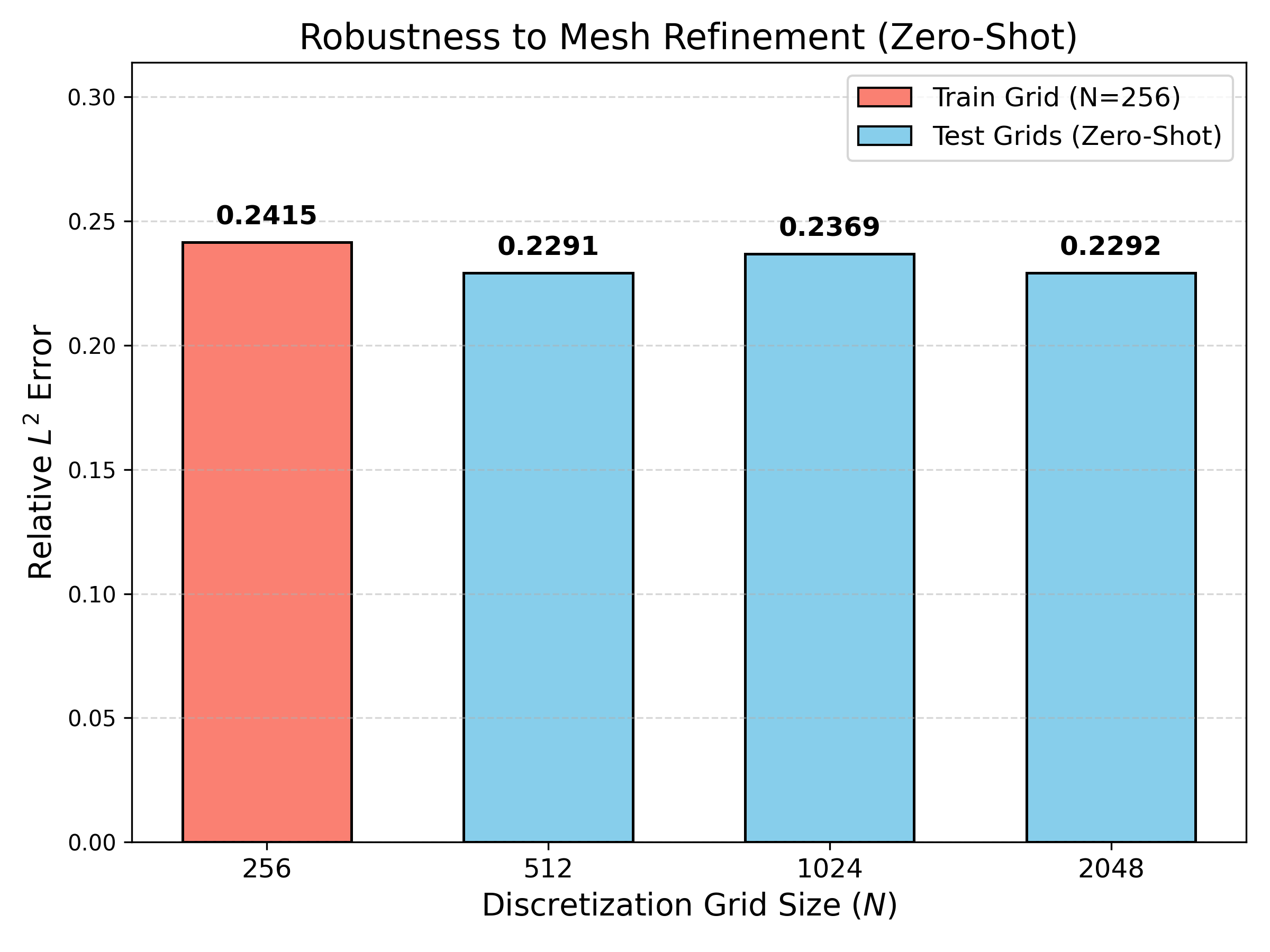}
\caption{Zero-shot generalization to unseen discretization resolutions. The model trained on $N=256$ is directly applied to $N=512$, $1024$, and $2048$.}
\label{fig:mesh_independence}
\end{figure}

Figure~\ref{fig:mesh_independence} reports the relative $L^2$ errors:
Training Grid ($N=256$): Error $\approx 0.2415$.
Testing Grids ($N=2048$): Error $\approx 0.2292$.

The results show that the error remains stable (and even decreases slightly due to better numerical integration on fine grids) as the resolution increases. This confirms that SC-Net has learned the underlying continuous operator mapping $\mathcal{R}: \ell^2 \to H^s$, rather than a fixed-dimension vector mapping. This property allows for flexible deployment in multi-scale physical simulations.

\section{Conclusion}
\label{sec:conclusion}

In this work, we presented SC-Net, a novel operator learning framework designed to solve ill-posed inverse problems by learning continuous spectral regularization functionals. By operating directly in the spectral domain of the forward operator, SC-Net effectively bridges the gap between rigorous classical regularization theory and the expressive power of modern deep neural networks.

Our contributions are threefold. 
First, we established both theoretically and empirically that SC-Net achieves minimax optimal convergence rates. Numerical experiments on 1D integral equations confirmed that the reconstruction error decays at the theoretical rate of $O(\delta^{0.5})$ for the tested Sobolev regularity, matching the optimal bound and outperforming the sub-optimal rates often observed with heuristic parameter selection in Tikhonov regularization. 
Second, we demonstrated the interpretability of the proposed method. Unlike standard black-box deep learning approaches, SC-Net learns an explicit, adaptive spectral filter. Visualizations revealed that the network automatically discovers a sharp cutoff mechanism—resembling an idealized Truncated SVD—thereby effectively suppressing high-frequency noise while preserving signal fidelity without manual intervention. 
Third, we validated the discretization invariance of the learned operator. A model trained on a coarse resolution ($N=256$) was successfully applied to significantly finer grids (up to $N=2048$) in a zero-shot manner, yielding stable error metrics. This property overcomes the fundamental limitation of fixed-resolution CNNs, making SC-Net highly suitable for multi-scale physical simulations.

While this study focused on linear inverse problems with known spectral decompositions, the framework opens several avenues for future research. Immediate extensions include generalizing SC-Net to non-linear inverse problems where the spectral basis is data-dependent, and applying the method to high-dimensional real-world tasks such as 3D medical imaging (CT/MRI), where operator singular value decompositions must be approximated efficiently. Furthermore, investigating the theoretical bounds of SC-Net under distributional shifts between training and testing data remains an important direction for ensuring robust deployment in safety-critical applications.
\bibliographystyle{plain}
\bibliography{Reference}
\end{document}